\begin{document}
\title{Comments on Friedman's Method for Class Distribution Estimation}
\titlerunning{Comments on Friedman's Method}
%
\author{Dirk Tasche\orcidID{0000-0002-2750-2970}}
\authorrunning{D.~Tasche}
%
\institute{Unit for Data Science and Computing, North-West University, South Africa\\
\email{55801447@nwu.ac.za}}
\maketitle              
\begin{abstract}
The purpose of class distribution estimation (also known as quantification)
is to determine the values of the prior class probabilities in 
a test dataset without class label observations. A variety of methods
to achieve this have been proposed in the literature, most of them based on the assumption
that the distributions of the training and test data are related through
prior probability shift (also known as label shift). Among these methods, 
Friedman's method has recently been found to perform relatively well
both for binary and multi-class quantification. 
We discuss the properties of Friedman's method and another approach
mentioned by Friedman (called DeBias method in the literature) in the 
context of a general framework for designing linear equation systems
for class distribution estimation.

\keywords{Prior probability shift  \and Label shift \and Class prevalence \and 
Quantification \and Asymptotic variance}
\end{abstract}
%
\section{Introduction}

The purpose of class distribution estimation (also known as quantification)
is to determine the values of the prior class probabilities in 
a test dataset without class label observations. A variety of methods
to achieve this have been proposed in the literature, most of them based on the assumption
that the distributions of the training and test data are related through
prior probability shift (also known as label shift). See Gonz\'{a}lez et al.~\cite{Gonzalez:2017:RQL:3145473.3117807}
and Esuli et al.~\cite{esuli2023learning} for recent surveys of applications of and methods 
for quantification.

Friedman's \cite{friedman2014class} method has recently been found to perform relatively well
both for binary and multi-class quantification (Schuhmacher et al.~\cite{schumacher2021comparative}, 
Donyavi et al.~\cite{donyavi2023mc}). On many real-world datasets, the performance
of Friedman's method seems to exceed the performance of the EM algorithm (Saerens et al.~\cite{saerens2002adjusting})
which is an implementation of the maximum likelihood estimator for the test prior class probabilities (also called
class prevalences). This observation is somewhat surprising because both Friedman's estimator 
and the EM algorithm involve estimates of the training posterior class probabilities which
are notoriously hard to estimate. Hence one might expect that the performances of Friedman's method
and the EM algorithm are at a more comparable level.

In order to find an explanation for the relatively good performance of Friedman's method, we study its properties and 
the properties of another approach
mentioned by Friedman (called DeBias method by Casta{\~n}o et al.~\cite{castano2024quantificationlib}) in the 
context of a general framework for designing linear equation systems
for class distribution estimation.

The outline of this paper and its contributions to the literature are as follows:
\begin{itemize}
\item Section~\ref{se:setting} sets out the general assumptions and notation for the rest of the paper.
\item In Section~\ref{se:linear}, we discuss the general framework for designing linear equation systems
for class distribution estimation. \eqref{eq:matrix} and \eqref{eq:covmatrix} of Theorem~\ref{th:basic} below
appear to be novel, covariance-based versions of the basic equation \eqref{eq:Firat}.
\item In Section~\ref{se:method}, we describe Friedman's method in detail and propose
an alternative implementation that avoids direct estimation of the posterior class probabilities 
(Remark~\ref{rm:alternative} below). 
\item In Section~\ref{se:unique}, we investigate conditions for the uniqueness of the solutions 
to linear equation systems for class distribution estimation. In Remark~\ref{rm:debias}, we show that 
DeBias, the second
method proposed for the binary case by Friedman~\cite{friedman2014class} which involves the variance of one
of the posterior class probabilities,
is a special case of a covariance matrix-based approach to the multi-class case considered in 
Corollary~\ref{co:solved}. This provides an answer to the open research question
``How to generalise the inequality of Corollary 6 of Tasche~\cite{tasche2014exact} to the multi-class case?''
raised in Section~4.12 of Krempl et al.~\cite{krempl_et_al:DagRep.10.4.1}. In addition, we show
that the population versions of DeBias and `Probabilistic adjusted count (PAC)' 
by Bella et al.~\cite{bella2010quantification} are identical (Remark~\ref{rm:same} below).
\item In Section~\ref{se:example}, we compare the asymptotic variances of DeBias, Friedman's method
and the maximum likelihood estimator in the binary case by means of a numerical example. The setting
of the example is semi-asymptotic with an infinite training dataset and a finite large test dataset.
\item Section~\ref{se:conclusions} concludes the paper with a summary of the findings.
\end{itemize}

\section{Setting}
\label{se:setting}

For this paper, we assume the following setting which is quite common in the study of dataset shift 
(see, for instance, Moreno-Torres et al.~\cite{MorenoTorres2012521}):
\begin{itemize}
\item A class variable $Y$ with values in $\mathcal{Y} = \{1, \ldots, \ell\}$ with $\ell \ge 2$ (multi-class case). 
A features vector $X$ with values in $\mathcal{X}$. 
\item Each example (or instance) has a class label $Y$ and features $X$. 
\item In the training dataset, for all examples their features $X$ and labels $Y$ are observed. $P$ denotes the
training (joint) distribution, also called source distribution, of $(X, Y)$ 
of which the training dataset has been sampled.
\item In the test dataset, only the features $X$ of an example can immediately be observed. Its class label $Y$
becomes known only with delay or not at all. 
$Q$ denotes the test (joint) distribution, also called target distribution, of $(X, Y)$ of 
which the test dataset has been sampled.
\item We assume $0 < P[Y=y] < 1$, $0 < Q[Y=y] < 1$ for all $y\in \mathcal{Y}$.
\item For the sake of a more concise notation, we define $p_y = P[Y=y]$ and $q_y = Q[Y=y]$ for 
$y \in \mathcal{Y}$.
\end{itemize}

We also use the notation $E_P[Z] = \int Z\, d P$ and $E_Q[Z] = \int Z \,d Q$ 
for integrable real-valued random variables $Z$.

The setting described above is called \emph{dataset shift} or \emph{distribution shift} in the literature 
if training and test distribution are not the same, i.e.~$P \neq Q$. In the rest of the paper, we consider
the following more specific type of dataset shift.

\begin{definition}\label{de:priorShift} The training distribution $P$ and the test distribution $Q$ are related
through \emph{prior probability shift} if for all $y\in\mathcal{Y}$ and all measurable sets $M \subset \mathcal{X}$ 
it holds that\footnote{Recall the notion of conditional probability for events $A$ and $B$: 
$P[A|B] = \frac{P[A\cap B]}{P[B]}$ if $P[B]>0$ and $P[A|B] =0$ otherwise.}
\begin{equation*}
P[X\in M|Y=y] \ = \ Q[X\in M|Y=y].
\end{equation*}
\end{definition}
The term `prior probability shift' appears to have been coined by Storkey~\cite{storkey2009training}. 
In the literature, prior probability shift is also called \emph{target shift} (Zhang et 
al.~\cite{Zhang:2013:TargetShift}), \emph{label shift} (Lipton et al.~\cite{pmlr-v80-lipton18a}), or \emph{global
drift} (Hofer and Krempl~\cite{hofer2013drift}).

Prior probability shift implies dataset shift, i.e.~$P \neq Q$, if $P[Y=y] \neq Q[Y=y]$ for at least one
$y \in \mathcal{Y}$. Hence, as the class labels $Y$ are not observed in the test dataset, the test prior
probabilities $q_y = Q[Y=y]$ must be estimated from feature observations in the test dataset as well as 
feature and class label observations in the training dataset. Such an estimation procedure is called
\emph{quantification} or \emph{class distribution estimation}.

\section{Linear equations for class distribution estimation}
\label{se:linear}

In the following, we treat class distribution estimation under prior probability shift
as a parametric estimation problem in a family of mixture distributions:
\begin{itemize}
\item We consider the distributions $Q_X$ on $\mathcal{X}$ that can be represented as
\begin{equation}\label{eq:family}
Q_X[M] \ = \ \sum_{y=1}^\ell q_y\,P[X \in M | Y=y]
\end{equation}
for all measurable sets $M \subset \mathcal{X}$. The family of these distributions is
parame{\-}trised through the test prior class probabilities $(q_1, \ldots, q_\ell) \in (0,1)^\ell$
with the additional constraint 
\begin{equation}\label{eq:sumOne}
\sum_{y=1}^\ell q_y\ =\ 1.
\end{equation}
\item Unless stated otherwise, for the purposes of this paper we assume that the conditional 
feature distributions $P[X \in M | Y=y]$, $M \subset \mathcal{X}$, 
under the training distribution are known and do not contribute to the estimation uncertainty.
\item The parametrised distribution family defined in \eqref{eq:family} is identifiable in the sense
of Definition 11.2.2 of Casella and Berger~\cite{Casella&Berger}, i.e.~$Q_X$ and $Q_X^\ast$ differ whenever
the corresponding parametrisations $(q_1, \ldots, q_\ell)$ and $(q_1^\ast, \ldots, q_\ell^\ast)$ differ.
\end{itemize}

According to San Mart{\'\i}n  and Quintana~\cite{martin2002consistency}, identifiability is necessary for the existence
of both asymptotically unbiased estimates and consistent estimates. This observation leaves open the question of
how to find such estimates.
In the following, we strive to design estimators of the class prior probabilities $q_y$ as unique solutions to
systems of linear equations\footnote{%
Other popular approaches to designing estimators include distribution matching 
(Gonz{\`a}lez et al.~\cite{Gonzalez:2017:RQL:3145473.3117807} and the references therein), 
ensemble methods (Serapi{\~a}o et al.~\cite{SerapiaoEnsembles2023} and the references therein)
and expectation maximisation as implementation of maximum likelihood estimation (Saerens et
al.~\cite{saerens2002adjusting}).}.

Calling the following result 
a theorem is an exaggeration as its proof is very short and basic. But it is fundamental for the study
and estimation of prior probability shift and in that sense deserves being called a theorem. Of course,
Theorem~\ref{th:basic} is not novel. In particular \eqref{eq:Firat} was mentioned by Saerens et 
al.~\cite{saerens2002adjusting} (Eq.~(2.5), with $Z$ chosen as a hard classifier) 
and quite likely also in earlier works. Even so, linking 
the notion of prior probability shift to the training dataset covariances of functions of the features 
and the indicators of the classes or the posterior class probabilities might have some degree of novelty,
at least in the multi-class case.

\begin{subequations}
\begin{theorem}\label{th:basic} Let $p_y = P[Y=y]$ and $q_y = Q[Y=y]$ for $y \in \mathcal{Y}$. Suppose that 
$P$ and $Q$ are related through prior probability shift in the sense of Definition~\ref{de:priorShift}
and that the  
random variable $Z$ is integrable both under $P$ and $Q$. Then it holds that\footnote{%
For sets $S$, define the indicator function $\mathbf{1}_S$ by $\mathbf{1}_S(s) = 1$ if $s \in S$ and
$\mathbf{1}_S(s) = 0$ if $s \notin S$.}
\begin{align}
E_Q[Z] & = \sum_{y=1}^\ell q_y\, E_P[Z|Y=y] \label{eq:Firat}\\
    & = \sum_{y=1}^\ell \frac{q_y}{p_y} \,\mathrm{cov}_P\bigl(Z,\, \mathbf{1}_{\{Y=y\}}\bigr)  + E_P[Z].
        \label{eq:matrix}
\end{align}
If $Z$ is $X$-measurable, i.e.~if there is a function $f:\mathcal{X}\to \mathbb{R}$ such that 
$Z = f(X)$, then it also follows that\footnote{%
$P[Y=y|X]$ denotes the posterior probability of $Y=y$ given $X$ in the sense
of general conditional probability as defined, for instance, in Section~33 of 
Billingsley~\cite{billingsley1986probability}.}
\begin{equation}
E_Q[Z]  = \sum_{y=1}^\ell \frac{q_y}{p_y} \,\mathrm{cov}_P\bigl(Z,\, P[Y=y|X]\bigr)  + E_P[Z].
        \label{eq:covmatrix}
\end{equation}
\end{theorem}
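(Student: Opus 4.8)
The plan is to establish the three displayed identities one after another, deriving each from its predecessor by elementary manipulations of expectations and covariances; none of the steps needs anything beyond the law of total expectation and the definition of covariance.

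First I would prove \eqref{eq:Firat}. Conditioning on $Y$ under the test distribution gives, by the law of total expectation, $E_Q[Z] = \sum_{y=1}^\ell q_y\, E_Q[Z|Y=y]$. The crucial point is that prior probability shift forces $E_Q[Z|Y=y] = E_P[Z|Y=y]$ for every $y$: conditionally on $\{Y=y\}$ the only remaining randomness is that of the features $X$, whose conditional law is identical under $P$ and $Q$ by Definition~\ref{de:priorShift}, so the two conditional expectations of the integrable variable $Z$ (a function of $(X,Y)$) coincide. Substituting yields \eqref{eq:Firat}.

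Next I would rewrite each term $E_P[Z|Y=y]$ as a training covariance. Since $E_P\bigl[Z\,\mathbf{1}_{\{Y=y\}}\bigr] = p_y\,E_P[Z|Y=y]$ and $E_P\bigl[\mathbf{1}_{\{Y=y\}}\bigr] = p_y$, the definition of covariance gives $\mathrm{cov}_P\bigl(Z,\mathbf{1}_{\{Y=y\}}\bigr) = p_y\,E_P[Z|Y=y] - p_y\,E_P[Z]$, so that $E_P[Z|Y=y] = \tfrac{1}{p_y}\,\mathrm{cov}_P\bigl(Z,\mathbf{1}_{\{Y=y\}}\bigr) + E_P[Z]$. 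Plugging this into \eqref{eq:Firat} and using the constraint \eqref{eq:sumOne}, $\sum_{y=1}^\ell q_y = 1$, to collapse the additive constant term produces \eqref{eq:matrix}.

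Finally, for \eqref{eq:covmatrix} I would show that replacing $\mathbf{1}_{\{Y=y\}}$ by the posterior probability $P[Y=y|X]$ leaves the covariance with an $X$-measurable $Z=f(X)$ unchanged. Because $Z$ is $\sigma(X)$-measurable, ``taking out what is known'' gives $E_P\bigl[Z\,P[Y=y|X]\bigr] = E_P\bigl[E_P[Z\,\mathbf{1}_{\{Y=y\}}\mid X]\bigr] = E_P\bigl[Z\,\mathbf{1}_{\{Y=y\}}\bigr]$, while $E_P\bigl[P[Y=y|X]\bigr] = p_y = E_P\bigl[\mathbf{1}_{\{Y=y\}}\bigr]$; hence $\mathrm{cov}_P\bigl(Z,P[Y=y|X]\bigr) = \mathrm{cov}_P\bigl(Z,\mathbf{1}_{\{Y=y\}}\bigr)$, and \eqref{eq:covmatrix} follows from \eqref{eq:matrix}. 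The only step requiring a little care is the first one, namely arguing that prior probability shift, stated only as an identity of the conditional laws of $X$, suffices to equate the $P$- and $Q$-conditional expectations of $Z$; I would handle this by making the dependence of $Z$ on $(X,Y)$ explicit, after which everything else is routine.
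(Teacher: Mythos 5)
Your proposal is correct and follows essentially the same route as the paper, whose proof is just a one-line appeal to the law of total probability together with the definitions of conditional expectation and covariance; you supply exactly the details that the paper leaves implicit (equating $E_Q[Z|Y=y]$ with $E_P[Z|Y=y]$ via the shift assumption, using $\sum_{y=1}^\ell q_y=1$ to collapse the constant, and the tower property to pass from $\mathbf{1}_{\{Y=y\}}$ to $P[Y=y|X]$ for $X$-measurable $Z$). Nothing further is needed.
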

\end{subequations}

\begin{proof} The theorem follows from the law of total probability combined with
the definitions of conditional expectation and covariance respectively.\qed
\end{proof}

\eqref{eq:Firat} provides the theoretical basis for Firat's (\cite{firat2016unified}, Section~3.2) 
constrained regression approach for quantification under prior probability shift. Firat's $K$ classes
correspond to the $\ell$ classes of this paper. The $L$ rows of Firat's matrix $\mathbf{X}$ emerge
when \eqref{eq:Firat} is applied to $L$ different variables $Z_1, \ldots, Z_L$.

As noted by Firat, \eqref{eq:Firat}, \eqref{eq:matrix} or \eqref{eq:covmatrix} can be the starting
point for setting up a system of linear equations for estimating the class prior probabilities
$q_y$ under prior probability shift. For instance, the choice $f_y(X) = \mathbf{1}_{C_y}(X)$ as crisp (or hard) 
`one vs.~all' classifier for class $y$, learned on the training dataset only, leads to the `Adjusted Count'
estimation approach used in the popular paper by Lipton et al.~\cite{pmlr-v80-lipton18a} who described it
as `method of moments'.
Observe that for this version of `one vs.~all', there is no problem with changing the type of dataset shift,
in contrast to the issue for combined `one vs.~all' quantifiers noted by Friedman~\cite{friedman2014class} and 
Donyavi et al.~\cite{donyavi2023mc}.

Some questions should be considered when designing a concrete instance
of such a linear equation system for quantification.

\emph{How many equations should be used?} If the number of classes in the model is $\ell = |\mathcal{Y}|$
one might conclude that at least $\ell$ equations are needed in order to obtain a unique solution. 
However, as another consequence
of the law of total probability, the $q_y$ must additionally fulfil the linear equation \eqref{eq:sumOne}.
Hence, in order to achieve uniqueness of the solution, 
at least $\ell$ equations must be set up if \eqref{eq:sumOne} is considered a constraint that is checked once
a solution has been found. Alternatively, if \eqref{eq:sumOne} is to be taken into account at the same time as 
the other equations, for uniqueness as a minimum it suffices to set up $\ell -1$ additional equations 
on the basis of Theorem~\ref{th:basic}. Sticking with $\ell -1$ equations has the advantage of reducing
the number of random variables $Z$ that must be chosen for the equations in
Theorem~\ref{th:basic}.

If more then $\ell$ equations are set up the resulting linear equation system for the $q_y$ is overdetermined
such that in its sample-based versions there might be no exact solution at all. Nonetheless, the overdetermined
case is naturally encountered when distribution-matching algorithms are implemented via binning
of the feature space $\mathcal{X}$ (DF$_{x}$ methods) or of the range of a continuous scoring classifier
(DF$_{y}$ methods), see Firat~\cite{firat2016unified}, Casta{\~n}o et al.~\cite{castano2024quantificationlib}
and the references in the latter paper.
To work around the lack of exact solutions, typically approximate solutions are determined by jointly minimising
the differences between the left-hand and right-hand sides of the equations with respect to some specific
metric like the Euclidean norm or the Hellinger divergence 
(see for instance Casta{\~n}o et al.~\cite{castano2024quantificationlib}).

In the following, we focus on the cases of systems of $\ell$ and $\ell-1$ equations, in the latter case together
with constraint \eqref{eq:sumOne}.

\emph{How should the random variables $Z$ appearing in the equations of Theorem~\ref{th:basic}
be chosen?} A very basic criterion for choosing the variables $Z$ is that it must be possible to compute
them from observations of the features $X$ only. This follows from the fact that on the left-hand sides of 
the equations in Theorem~\ref{th:basic}, the variables $Z$ are integrated under the test distribution $Q$ but
the class labels $Y$ are not observed under $Q$. Hence one must make sure that $Z = f(X)$ for some function $f$.

Among others, the following criteria for selecting such functions $f$ have been considered in the
literature:
\begin{itemize}
\item Reducing the variances of the estimated $q_y$. See Friedman~\cite{friedman2014class} and 
Vaz et al.~\cite{Vaz&Izbicki&Stern2019}
for approaches to the direct minimisation of the variance.
Findings by Vaz et al.~\cite{vaz2017prior} and 
Tasche~\cite{Tasche2021Minimising} suggest that deploying variables $Z$ that are able to separate the classes
with high accuracy also reduces the variances of the class prior estimates.
\item Speed of computation. See for instance Hassan et al.~\cite{hassan2020accurately}.
\end{itemize}
With the exception of Hassan et al.~\cite{hassan2020accurately}, in the literature primarily the choices
$Z = \mathbf{1}_{C_y}$ (hard classifier for one of the classes $y$ in $\mathcal{Y}$) and 
$Z = P[Y=y|X]$ (posterior probability under $P$ for class $y$) have been considered.
Below, we consider Friedman's~\cite{friedman2014class} choices of hard classifiers and $Z = P[Y=y|X]$ in more detail.

\section{Friedman's method}\label{se:method}

Friedman~\cite{friedman2014class} proposed two class distribution estimation methods:
\begin{itemize}
\item He discussed in detail one method (later called `Friedman's method' by 
Schuhmacher et al.~\cite{schumacher2021comparative})
based on a specific choice of hard classifiers both for the binary and multi-class cases. We revisit
Friedman's method in this section.
\item Another method, specified only for the binary case, is based on the variance of the posterior positive class
probability under the training distribution (later called `DeBias' method by 
Casta{\~n}o et al.~\cite{castano2024quantificationlib}). This method, without being named, had been mentioned before by
Tasche~\cite{tasche2014exact} as Corollary~6. We discuss this approach in Remark~\ref{rm:debias} below.
\end{itemize}
First, we consider Friedman's method in the binary case $\ell = 2$. As Friedman himself wrote he was not
the first researcher to think about this method.

\emph{Method Max (Forman~\cite{forman2008quantifying}, Section~2.2).} Forman wrote on page~173:
``Considering the earlier discussion of small denominators,
another likely policy is where the denominator \emph{(tpr-fpr)} is maximized: \emph{method Max}.''
Here, Forman referred to crisp binary classifiers (not necessary most accurate) which were
derived from a `raw classifier' (i.e.\ a real-valued scoring classifier).

Accordingly, Friedman's method in the binary case is the special case of Forman's method Max when
the underlying scoring classifier is chosen as the Bayes classifier, i.e.\ the posterior probability of
the positive class.

\emph{Derivation of Friedman's method.} Firat~\cite{firat2016unified} describes 
on p.~2 the rationale for Friedman's method as follows: ``Friedman uses the optimum threshold 
that minimizes the variance of proportion estimates (Friedman,
2014).'' This statement is somewhat misleading, as Friedman~\cite{friedman2014class} actually
does not maximise the variance of the estimator but only the denominator on the right-hand side of 
the following equation (in the notation of this paper)
\begin{equation}\label{eq:binary}
q_1 \ = \ \frac{E_Q[Z]  - E_P[Z|y=2]}{E_P[Z|y=1] - E_P[Z|y=2]},
\end{equation}
over all random variables $0 \le Z=f(X) \le 1$.  
Note that \eqref{eq:binary} is a special case 
of \eqref{eq:Firat} for $\ell = 2$. 

It turns out that 
\begin{equation}\label{eq:max}
\arg\max\limits_{f:\mathcal{X} \to [0,1]} E_P[f(X)|y=1] - E_P[f(X)|y=2] \ = \
f^\ast
\end{equation}
with $f^\ast(x) = 1$ if $P[Y=1|X=x] > p_1$, $f^\ast(x) = 0$ if $P[Y=1|X=x] < p_1$ and
$f^\ast(x)$ arbitrary if $P[Y=1|X=x] = p_1$. 

The solution to the problem of minimising the sample variance of the estimator defined by
\eqref{eq:binary} is less obvious. It has been tackled numerically 
by Vaz et al.~(\cite{Vaz&Izbicki&Stern2019}, Section~2.3), and by Tian et al.~\cite{TianICML2023} by involving
influence functions.

\begin{remark}\label{rm:alternative}
Friedman~\cite{friedman2014class} and subsequent users of his method appear to have implemented it
by means of  plugging-in an estimate of the posterior probability $P[Y=1|X]$ into the function $f^\ast$ as defined
in \eqref{eq:max}. However, as $P[Y=1|X]$ could be difficult to estimate with satisfactory accuracy, such
an implementation might be suboptimal.

Note that \eqref{eq:max} is equivalent to
\begin{equation}\label{eq:min}
\arg\min\limits_{f:\mathcal{X} \to [0,1]} (1-p_1)\,E_P[f(X)\,\mathbf{1}_{\{Y=1\}}] + 
	p_1\,E_P[(1-f(X))\,\mathbf{1}_{\{Y=2\}}] \ = \ 1 - f^\ast,
\end{equation}
with $f^\ast$ as in \eqref{eq:max}. \eqref{eq:min} can be read as the problem to minimise the expected cost-sensitive 
error for a binary classification problem. This problem can be dealt with directly through a variety of 
approaches, resulting in approximations of the optimal classifier which do not require the estimation of 
$P[Y=1|X]$. The cost-sensitive minimisation problem can also be translated into a standard classification
problem by appropriate re-weighting (Zadrozny et al.~\cite{zadrozny2003weighting}).\hfill \qed
\end{remark}

\emph{Friedman's method for more than two classes.} Friedman~\cite{friedman2014class} suggested
defining $Z_y = f^\ast_y(X)$ for $y \in \mathcal{Y}$ with $f^\ast_y(x) = 1$ if $P[Y=y|X=x] > p_y$,
 $f^\ast_y(x) = 0$ if $P[Y=y|X=x] \le p_y$, and then using \eqref{eq:Firat} with $Z_y$, $y = 1, \ldots, \ell$,
to obtain a system of $\ell$ linear equations for the test prior probabilities of the classes $y \in \mathcal{Y}$.

According to Schuhmacher et al.~\cite{schumacher2021comparative}, Friedman's method works well in 
binary quantification problems and still achieves good performance in multi-class settings.

\section{Uniqueness of solutions and covariance matrix-based approaches}
\label{se:unique}

As discussed in Section~\ref{se:linear}, uniqueness of the solutions is an important design criterion 
for setting up a system of linear equations for class distribution estimation under prior probability shift.
In this section, we provide more detail regarding the number of equations needed and look closer at designs
based on covariance matrices estimated in the training dataset.

\subsection{How many equations are needed?}

\eqref{eq:covmatrix} of Theorem~\ref{th:basic} is interesting because the choice $Z = P[Y=y|X]$ for
fixed $y = 1, \ldots, \ell$, implies the matrix identity
\begin{gather}
\begin{pmatrix}
E_Q\bigl[P[Y=1|X]\bigr] - p_1\\
\vdots\\
E_Q\bigl[P[Y=\ell|X]\bigr] - p_\ell
\end{pmatrix}
 =
\Sigma_P
\times
\begin{pmatrix}
\frac{q_1}{p_1}\\
\vdots\\
\frac{q_\ell}{p_\ell}
\end{pmatrix}, \label{eq:covarianceMatrix}\\
\Sigma_P  =
\begin{pmatrix}
\mathrm{cov}_P\bigl(P[Y=1|X],\, P[Y=1|X]\bigr) & \ldots & 
    \mathrm{cov}_P\bigl(P[Y=1|X],\, P[Y=\ell|X]\bigr)\\
\vdots & \ddots & \vdots \\
\mathrm{cov}_P\bigl(P[Y=\ell|X],\, P[Y=1|X]\bigr) & \ldots & 
    \mathrm{cov}_P\bigl(P[Y=\ell|X],\, P[Y=\ell|X]\bigr)
\end{pmatrix}.\notag
\end{gather}
\eqref{eq:covarianceMatrix} connects the prior class probabilities $p_y$ under the training distribution,
the prior class probabilities $q_y$ under the test distribution, and the averages under the test 
distribution $E_Q\bigl[P[Y=y|X]\bigr]$ of the training posterior class probabilities through the covariance matrix 
$\Sigma_P$ of the training posterior probabilities under the training distribution.
All quantities in \eqref{eq:covarianceMatrix} but the test class prior probabilities $q_y$ can be estimated
from the training dataset and the features in the test dataset in principle. 
Hence, if the square matrix $\Sigma_P$ were invertible, 
\eqref{eq:covarianceMatrix} could be solved for the $q_y$ by matrix inversion.

Unfortunately, as follows from the following proposition, the covariance matrix $\Sigma_P$ is never invertible.

\begin{proposition}\label{pr:low}
Let $Z_1, \ldots, Z_r$ be integrable random variables under the distribution $P$. Suppose that
$Y$ is a discrete random variable with values in $\mathcal{Y} = \{1, \ldots, \ell\}$ with $\ell \ge 2$ and
$X$ is a random vector with values in $\mathcal{X}$. Define the matrices
$M = (m_{ij})_{\substack{i=1, \ldots, r\\ j=1, \ldots, \ell}}$ and 
$M^\ast = (m^\ast_{ij})_{\substack{i=1, \ldots, r\\ j=1, \ldots, \ell}}$ by
\begin{equation*}
m_{ij} = \mathrm{cov}(Z_i, \,\mathbf{1}_{\{Y=j\}}) \quad \text{and} \quad
m^\ast_{ij} = \mathrm{cov}\bigl(Z_i, \,P[Y=j|X]\bigr).
\end{equation*}
Then it follows that 
\begin{equation*}
\mathrm{rank}(M) \le \ell -1 \quad \text{and} \quad \mathrm{rank}(M^\ast) \le \ell -1.
\end{equation*}
\end{proposition}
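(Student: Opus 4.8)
The key observation is that both matrices $M$ and $M^\ast$ have columns that sum to zero, which immediately bounds their rank by $\ell - 1$. I would prove this by exhibiting a fixed nonzero vector in the left... wait, actually the dependency is among the columns, so I need a nonzero vector $v \in \mathbb{R}^\ell$ with $Mv = 0$, namely $v = (1, \ldots, 1)^\top$.

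First I would handle $M$. For each fixed row index $i$, consider the sum $\sum_{j=1}^\ell m_{ij} = \sum_{j=1}^\ell \mathrm{cov}(Z_i, \mathbf{1}_{\{Y=j\}})$. By bilinearity of covariance this equals $\mathrm{cov}\bigl(Z_i, \sum_{j=1}^\ell \mathbf{1}_{\{Y=j\}}\bigr)$. Since $Y$ takes values in $\mathcal{Y} = \{1, \ldots, \ell\}$, the indicators form a partition of unity: $\sum_{j=1}^\ell \mathbf{1}_{\{Y=j\}} = 1$ almost surely. Hence each row-sum is $\mathrm{cov}(Z_i, 1) = 0$. Therefore $M \mathbf{1} = 0$ where $\mathbf{1} = (1, \ldots, 1)^\top \neq 0$, so the columns of $M$ are linearly dependent and $\mathrm{rank}(M) \le \ell - 1$.

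The argument for $M^\ast$ is essentially identical: $\sum_{j=1}^\ell m^\ast_{ij} = \mathrm{cov}\bigl(Z_i, \sum_{j=1}^\ell P[Y=j|X]\bigr)$, and $\sum_{j=1}^\ell P[Y=j|X] = 1$ almost surely since the posterior probabilities of a partition sum to one. So again $M^\ast \mathbf{1} = 0$ and $\mathrm{rank}(M^\ast) \le \ell - 1$.

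**Main obstacle.** There is essentially no obstacle here — the proof is a one-line consequence of bilinearity of covariance together with the fact that class indicators (and posterior class probabilities) sum to one. The only point requiring minimal care is that all the relevant random variables are integrable so that the covariances are well-defined and the bilinearity manipulation is legitimate; this is guaranteed by the integrability hypothesis on $Z_1, \ldots, Z_r$ together with the boundedness of the indicators and posterior probabilities. One might also remark that the bound is typically tight, and that this is precisely why \eqref{eq:covarianceMatrix} cannot be solved by inverting $\Sigma_P$ directly but must instead be combined with the normalisation constraint \eqref{eq:sumOne}.
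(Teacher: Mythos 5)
Your proof is correct and takes essentially the same route as the paper: exhibiting $v = (1,\ldots,1)^\top$ in the kernel of $M$ and $M^\ast$ via bilinearity of covariance and the facts $\sum_{j=1}^\ell \mathbf{1}_{\{Y=j\}} = 1$ and $\sum_{j=1}^\ell P[Y=j|X] = 1$. Your version merely spells out the row-sum computation and the integrability point that the paper leaves implicit.
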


\begin{proof}
Due to the fact that $1 = \sum_{j=1}^\ell \mathbf{1}_{\{Y=j\}}$ and $1 = \sum_{j=1}^\ell P[Y=j|X]$,
the vector $v = (1, 1, \ldots, 1)^T \in \mathbb{R}^\ell$ is an element of the kernels of $M$ and $M^\ast$, i.e.\
it holds that $M\times v = 0 = M^\ast \times v$. This implies the assertion. \hfill \qed
\end{proof}

As a consequence of Proposition~\ref{pr:low}, there is no possible choice of random variables 
$Z_1, \ldots, Z_\ell$ that could serve on the basis of \eqref{eq:matrix} or \eqref{eq:covmatrix}
to create a system of $\ell$ linear equations with a unique solution 
for the $\ell$ unknowns $q_1, \ldots, q_\ell$. However, Proposition~\ref{pr:low} leaves open the 
question if such an equation system can be constructed on the basis of \eqref{eq:Firat}. 

\begin{remark}\label{rm:consistent}
For integrable random variables $Z_1, \ldots, Z_r$, define the matrix 
$\widetilde{M} = (\widetilde{m}_{i j})_{\substack{i=1, \ldots, r\\ j=1, \ldots, \ell}}$ by
$\widetilde{m}_{i j} \ = \ E_P[Z_i|Y=j]$.

\begin{subequations}
$\widetilde{M}$ can be rewritten as
\begin{equation}\label{eq:Firat.1}
\widetilde{M}\ =\  L \times D,
\end{equation}
where 
\begin{equation}\label{eq:consistent}
L \ =\  \begin{pmatrix} 
E_P[Z_1\,\mathbf{1}_{\{Y=1\}}] & \ldots & E_P[Z_1\,\mathbf{1}_{\{Y=\ell\}}]\\
\vdots & \ddots & \vdots \\
E_P[Z_r\,\mathbf{1}_{\{Y=1\}}] & \ldots & E_P[Z_r\,\mathbf{1}_{\{Y=\ell\}}]
\end{pmatrix}
\end{equation}
\end{subequations}
and 
$D = (d_{i j})_{i,j = 1, \ldots, \ell}$ is the diagonal matrix with
$d_{i j} = \frac{1}{p_i}$ if $i = j$ and $d_{i j} = 0$ if $i \neq j$. In particular, we have
$\mathrm{rank}(D) = \ell$.

Define the vector $v = (1, 1, \ldots, 1)^T$ as in the
proof of Proposition~\ref{pr:low}. Then it follows that $L \times v = (E_P[Z_1], \ldots, E_P[Z_r])^T$.
If $Z_1, \ldots, Z_r$ are chosen such that $(E_P[Z_1], \ldots, E_P[Z_r]) \neq 0$, as a consequence 
$L \times v \neq 0$ results. Hence there is no obvious reason as in the case of Proposition~\ref{pr:low}
for the rank of $L$ (and by \eqref{eq:Firat.1} also of $\widetilde{M}$) to be less than maximal,
i.e.~being equal to $\min(r,\ell)$. This observation suggests that \eqref{eq:Firat} can be used
to obtain a system of $\ell$ linear equations with a unique solution for the
test class prior probabilities $q_1, \ldots, q_\ell$.\hfill \qed
\end{remark}

\subsection{Invertible covariance matrices}
\label{se:invertible}

The fact that the covariance $\Sigma_P$ of the posterior class probabilities
$P[Y=y|X]$, $y \in \mathcal{Y}$ in \eqref{eq:covarianceMatrix} cannot be inverted 
is caused by the linear dependence between the posterior probabilities since
$\sum_{y=1}^\ell P[Y=y|X]  =  1$.
This issue can be avoided by disregarding one of probabilities, say $P[Y=\ell|X]$. Indeed,
making use of the identity $\mathbf{1}_{\{Y=\ell\}} = 1 - \sum_{y=1}^{\ell-1} \mathbf{1}_{\{Y=y\}}$ 
in \eqref{eq:matrix} produces the following corollary to Theorem~\ref{th:basic}.

\begin{subequations}
\begin{corollary}\label{co:invert}
Let $p_y = P[Y=y]$ and $q_y = Q[Y=y]$ for $y \in \mathcal{Y}$. Suppose that 
$P$ and $Q$ are related through prior probability shift in the sense of Definition~\ref{de:priorShift}
and that the  
random variable $Z$ is integrable both under $P$ and $Q$. Then it holds that
\begin{equation}
E_Q[Z] = \sum_{y=1}^{\ell-1} \left(\frac{q_y}{p_y}  - \frac{q_\ell}{p_\ell}\right) 
    \mathrm{cov}_P\bigl(Z,\, \mathbf{1}_{\{Y=y\}}\bigr)  + E_P[Z].
        \label{eq:invmatrix}
\end{equation}
If $Z$ is $X$-measurable, i.e.~if there is a function $f:\mathcal{X}\to \mathbb{R}$ such that 
$Z = f(X)$, then it also follows that
\begin{equation}
E_Q[Z]  = \sum_{y=1}^{\ell-1} \left(\frac{q_y}{p_y}  - \frac{q_\ell}{p_\ell}\right) 
        \mathrm{cov}_P\bigl(Z,\, P[Y=y|X]\bigr)  + E_P[Z].
        \label{eq:invcovmatrix}
\end{equation}
\end{corollary}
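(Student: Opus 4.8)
The plan is to derive Corollary~\ref{co:invert} directly from Theorem~\ref{th:basic} by eliminating the $\ell$-th summand through the linear constraint satisfied by the indicators of the classes. The starting point is \eqref{eq:matrix}, which reads $E_Q[Z] = \sum_{y=1}^\ell \frac{q_y}{p_y}\,\mathrm{cov}_P(Z,\mathbf{1}_{\{Y=y\}}) + E_P[Z]$. The only algebraic input needed is the identity $\mathbf{1}_{\{Y=\ell\}} = 1 - \sum_{y=1}^{\ell-1}\mathbf{1}_{\{Y=y\}}$, valid because the events $\{Y=y\}$, $y\in\mathcal{Y}$, partition the sample space.

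First I would isolate the last term of the sum in \eqref{eq:matrix} and substitute the above identity into its covariance factor. Using bilinearity of the covariance together with the fact that $\mathrm{cov}_P(Z, c) = 0$ for the constant $c = 1$, this gives $\mathrm{cov}_P(Z,\mathbf{1}_{\{Y=\ell\}}) = -\sum_{y=1}^{\ell-1}\mathrm{cov}_P(Z,\mathbf{1}_{\{Y=y\}})$. Plugging this back in, the term $\frac{q_\ell}{p_\ell}\,\mathrm{cov}_P(Z,\mathbf{1}_{\{Y=\ell\}})$ becomes $-\frac{q_\ell}{p_\ell}\sum_{y=1}^{\ell-1}\mathrm{cov}_P(Z,\mathbf{1}_{\{Y=y\}})$, which I would then merge index-by-index with the first $\ell-1$ terms of the original sum. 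Collecting coefficients of each $\mathrm{cov}_P(Z,\mathbf{1}_{\{Y=y\}})$ for $y = 1,\ldots,\ell-1$ yields exactly the factor $\bigl(\frac{q_y}{p_y} - \frac{q_\ell}{p_\ell}\bigr)$, establishing \eqref{eq:invmatrix}.

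For \eqref{eq:invcovmatrix}, I would repeat the identical argument starting from \eqref{eq:covmatrix} instead of \eqref{eq:matrix}, now invoking the pointwise identity $P[Y=\ell|X] = 1 - \sum_{y=1}^{\ell-1}P[Y=y|X]$, which holds (almost surely) for any version of the conditional probabilities since $\sum_{y=1}^\ell P[Y=y|X] = 1$. The requirement that $Z$ be $X$-measurable is inherited verbatim from Theorem~\ref{th:basic}, since \eqref{eq:covmatrix} is only asserted under that hypothesis. The bilinearity-and-constant step is the same, so the same collection of coefficients produces \eqref{eq:invcovmatrix}.

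There is essentially no obstacle here: the proof is a one-line substitution plus bilinearity of covariance, and the only thing to be mildly careful about is that the constant term dropped by $\mathrm{cov}_P(Z,1)=0$ does not silently alter $E_P[Z]$ — it does not, because that additive $E_P[Z]$ term in \eqref{eq:matrix} and \eqref{eq:covmatrix} is untouched by the manipulation, which only redistributes the covariance summands. Accordingly I would simply write: ``Substitute $\mathbf{1}_{\{Y=\ell\}} = 1 - \sum_{y=1}^{\ell-1}\mathbf{1}_{\{Y=y\}}$ (respectively $P[Y=\ell|X] = 1 - \sum_{y=1}^{\ell-1}P[Y=y|X]$) into \eqref{eq:matrix} (respectively \eqref{eq:covmatrix}), use bilinearity of the covariance and $\mathrm{cov}_P(Z,1)=0$, and collect terms.'' and close with \qed.
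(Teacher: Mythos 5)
Your proposal is correct and follows exactly the route the paper intends: the corollary is obtained by substituting $\mathbf{1}_{\{Y=\ell\}} = 1 - \sum_{y=1}^{\ell-1}\mathbf{1}_{\{Y=y\}}$ (respectively $P[Y=\ell|X] = 1 - \sum_{y=1}^{\ell-1}P[Y=y|X]$) into \eqref{eq:matrix} (respectively \eqref{eq:covmatrix}) and using bilinearity of the covariance with $\mathrm{cov}_P(Z,1)=0$. Your write-up just makes explicit the term-collection step that the paper leaves implicit.
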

\end{subequations}

Corollary~\ref{co:invert} suggests the following approach to estimating the test
class prior probabilities $q_1, \ldots, q_\ell$.
\begin{subequations}
\begin{corollary}\label{co:solved}
Assume that the functions $f_1, \ldots, f_{\ell-1}: X \to \mathbb{R}$ are such that the matrix
\begin{equation}\label{eq:C}
C \ = \ \begin{pmatrix}
\mathrm{cov}(f_1(X),\, \mathbf{1}_{\{Y=1\}}) & \ldots & \mathrm{cov}(f_1(X),\, \mathbf{1}_{\{Y=\ell-1\}}) \\
\vdots & \ddots & \vdots \\
\mathrm{cov}(f_{\ell-1}(X),\, \mathbf{1}_{\{Y=1\}}) & \ldots & \mathrm{cov}(f_{\ell-1}(X),\, \mathbf{1}_{\{Y=\ell-1\}})
\end{pmatrix}
\end{equation}
has rank $\ell-1$, i.e.~it is invertible.

Let $\bigl(E_Q[f_1(X)]-E_P[f_1(X)], \ldots, E_Q[f_{\ell-1}(X)]-E_P[f_{\ell-1}(X)]\bigr)^T = z$
and $C^{-1}\times z = (s_1, \ldots, s_{\ell-1})^T$.

Then it follows that 
\begin{equation}\label{eq:q_y}
q_y  = p_y \left(s_y + 1 - \sum_{i=1}^{\ell-1} p_i\,s_i\right), \ y = 1, \ldots, \ell-1,\quad 
q_\ell  = p_\ell \left(1 - \sum_{i=1}^{\ell-1} p_i\,s_i\right).
\end{equation}
\end{corollary}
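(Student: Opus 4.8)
The plan is to read \eqref{eq:invmatrix} of Corollary~\ref{co:invert} as a system of $\ell-1$ linear equations in the $\ell-1$ auxiliary quantities $t_y := \frac{q_y}{p_y} - \frac{q_\ell}{p_\ell}$, invert it using the hypothesis that $C$ is invertible, and then recover the individual $q_y$ from the normalisation constraint \eqref{eq:sumOne}.

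First I would substitute $Z = f_i(X)$ into \eqref{eq:invmatrix} for each $i = 1, \ldots, \ell-1$. Since each $f_i$ is $X$-measurable (and, as implicitly assumed, integrable under $P$ and $Q$), this is legitimate and yields
$E_Q[f_i(X)] - E_P[f_i(X)] = \sum_{y=1}^{\ell-1} t_y\, \mathrm{cov}_P\bigl(f_i(X),\, \mathbf{1}_{\{Y=y\}}\bigr)$.
Stacking these $\ell-1$ identities gives the matrix equation $z = C \times t$ with $t = (t_1, \ldots, t_{\ell-1})^T$, where $C$ and $z$ are exactly the matrix and vector defined in the statement. By the rank hypothesis $C$ is invertible, so $t = C^{-1}\times z = s$; that is, $s_y = \frac{q_y}{p_y} - \frac{q_\ell}{p_\ell}$ for every $y \le \ell-1$.

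Next, writing $c := q_\ell/p_\ell$, the previous step gives $q_y = p_y(s_y + c)$ for $y \le \ell-1$ and $q_\ell = p_\ell c$. Summing over all $y \in \mathcal{Y}$ and using \eqref{eq:sumOne} together with $\sum_{y=1}^\ell p_y = 1$, the terms involving $c$ collapse to $c\sum_{y=1}^\ell p_y = c$, which leaves $1 = \sum_{i=1}^{\ell-1} p_i s_i + c$, hence $c = 1 - \sum_{i=1}^{\ell-1} p_i s_i$. Substituting this value of $c$ back into the expressions for $q_y$ ($y \le \ell-1$) and $q_\ell$ produces exactly \eqref{eq:q_y}.

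There is no genuine obstacle here beyond bookkeeping; the only point worth stating explicitly is why a solution of the system exists in the first place. Under prior probability shift the true priors $(q_1, \ldots, q_\ell)$ satisfy \eqref{eq:invmatrix} (by Corollary~\ref{co:invert}) and \eqref{eq:sumOne} by construction, so the combined system is consistent and the computation above simply identifies its unique solution. Alternatively, for a purely algebraic reading, one could verify directly that the $q_y$ defined in \eqref{eq:q_y} sum to one and satisfy \eqref{eq:invmatrix}, which is the same collapsing-of-$c$-terms calculation carried out in reverse.
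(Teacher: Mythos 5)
Your proof is correct and follows exactly the route the paper intends: apply \eqref{eq:invmatrix} of Corollary~\ref{co:invert} with $Z = f_i(X)$ to get $z = C\,t$ for $t_y = \frac{q_y}{p_y} - \frac{q_\ell}{p_\ell}$, invert $C$, and pin down $q_\ell/p_\ell$ via the constraint \eqref{eq:sumOne}. The bookkeeping recovering \eqref{eq:q_y} is accurate, so nothing is missing.
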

\end{subequations}

Observe that as a consequence of the general properties of conditional expectation\footnote{%
See, for instance, Problem~34.6 of Billingsley~\cite{billingsley1986probability}).} matrix $C$
of \eqref{eq:C} can be represented as
\begin{equation}\label{eq:Calt}
C \ = \ \begin{pmatrix}
\mathrm{cov}(f_1(X),\, P[Y=1|X]) & \ldots & \mathrm{cov}(f_1(X),\, P[Y=\ell-1|X]) \\
\vdots & \ddots & \vdots \\
\mathrm{cov}(f_{\ell-1}(X),\, P[Y=1|X]) & \ldots & \mathrm{cov}(f_{\ell-1}(X),\, P[Y=\ell-1|X])
\end{pmatrix}.
\end{equation}
With the special choice $f_y(X) = P[Y=y|X]$ for $y = 1, \ldots, \ell-1$ matrix $C$ as represented in
\eqref{eq:Calt} becomes the
covariance matrix of $\Sigma^\ast_P$ of $P[Y=1|X], \ldots, P[Y=\ell-1|X]$.

\begin{subequations}
\begin{remark}[DeBias method]\label{rm:debias}
Suppose we are in the binary case $\ell =2$ and apply Corollary~\ref{co:solved} with $C$ as given in 
\eqref{eq:Calt} and $f_1(X) = P[Y=1|X]$. This implies $C = \Sigma^\ast_P
= \mathrm{var}\bigl[P[Y=1|X]\bigr]$. We then obtain by means of \eqref{eq:q_y}
\begin{align}
q_1  & =  \frac{p_1\,(1-p_1)}{\mathrm{var}_P\bigl[P[Y=1|X]\bigr]}\,\bigl( E_Q\bigl[P[Y=1|X]\bigr]-p_1\bigr) +
    p_1,\label{eq:debias.solved}\\
\intertext{which is equivalent to }
E_Q\bigl[P[Y=1|X]\bigr] & = q_1\,\frac{\mathrm{var}_P\bigl[P[Y=1|X]\bigr]}{p_1\,(1-p_1)} +
    p_1 \left(1 - \frac{\mathrm{var}_P\bigl[P[Y=1|X]\bigr]}{p_1\,(1-p_1)}\right).\label{eq:debias}
\end{align}
\eqref{eq:debias} appears to have been first published by Tasche~\cite{tasche2014exact} (Corollary~6) and
then to have been presented at a conference by Friedman~\cite{friedman2014class}. This approach to estimating
$q_1$ has been called `DeBias' method by Casta{\~n}o et al.~\cite{castano2024quantificationlib}.

Hence, Corollary~\ref{co:solved} with $C = \Sigma^\ast_P$ may be interpreted as multi-class extension of the
DeBias approach.\hfill \qed
\end{remark}
\end{subequations}

\begin{subequations}
\begin{remark}[Probabilistic Adjusted Count (PAC)]\label{rm:PAC}
Suppose again we are in the binary case $\ell =2$ and apply Corollary~\ref{co:solved}, this time with
$C$ as represented in \eqref{eq:C} and $f_1(X) = P[Y=1|X]$.
This implies $C = E_P[\bigl[P[Y=1|X]\,\mathbf{1}_{\{Y=1\}}\bigr] -p_1^2$. We then obtain by means of \eqref{eq:q_y}
\begin{align}
q_1  & =  p_1\,(1-p_1) \frac{E_Q\bigl[P[Y=1|X]\bigr]-p_1}{E_P[\bigl[P[Y=1|X]\,\mathbf{1}_{\{Y=1\}}\bigr] -p_1^2} + 
    p_1\,\label{eq:PAC.1}\\
\intertext{which is equivalent to }
q_1  & = \frac{E_Q\bigl[P[Y=1|X]\bigr] - E_P\bigl[P[Y=1|X] \bigm| Y=2 \bigr]}
    {E_P\bigl[P[Y=1|X] \bigm| Y=1 \bigr] - E_P\bigl[P[Y=1|X] \bigm| Y=2 \bigr]}.\label{eq:PAC}
\end{align}
\eqref{eq:PAC} was called `probability
estimation \& average (P\&A)' method by Bella et al.~\cite{bella2010quantification} but 
is now commonly referred to as 
`probabilistic adjusted count (PAC)' (Gonz\'alez et al.~\cite{Gonzalez:2017:RQL:3145473.3117807}). Its multi-class
extension is sometimes called `generalized probabilistic adjusted count (GPAC)' (see, for instance, Schuhmacher
et al.~\cite{schumacher2021comparative}) and also covered by Corollary~\ref{co:solved} with
the choice $f_y(X) = P[Y=y|X]$ in \eqref{eq:C}. \hfill \qed
\end{remark}
\end{subequations}

\begin{remark}\label{rm:same}
Observe that in \eqref{eq:PAC.1} it holds that 
\begin{equation*}
E_P[\bigl[P[Y=1|X]\,\mathbf{1}_{\{Y=1\}}\bigr] -p_1^2 \ = \ \mathrm{var}_P\bigl[P[Y=1|X]\bigr].
\end{equation*}
By \eqref{eq:debias.solved}, therefore in the binary case the DeBias and PAC methods for class 
distribution estimation are identical at population level, i.e.~with infinite training and test
datasets. This observation is not necessarily true 
in practice when DeBias and PAC estimates respectively are calculated based on sample versions
of \eqref{eq:debias.solved} and \eqref{eq:PAC.1}. \hfill \qed
\end{remark}

\section{Comparing asymptotic variances}\label{se:example}

As mentioned in Section~\ref{se:setting}, we consider class distribution estimation as a two-sample problem:
\begin{itemize}
\item A training sample for estimating certain quantities (e.g.~the true positive and false negative rates of
a classifier) under the training distribution because the quantities are needed for 
estimating the class prior probabilities under the test distribution.
\item A test sample for estimating the class prior probabilities under the test distribution, based on the
quantities estimated on the training sample.
\end{itemize}
Hence minimising the error of a method for class distribution estimation means minimising the 
estimation errors on the two samples.

In the following, we look at the semi-asymptotic binary case ($\ell = 2$) where
\begin{itemize}
\item the training distribution $P$ is known (infinite sample) such that
the prior class probabilities $p_y$ and the posterior class probabilities $P[Y=y|X]$ can be
exactly determined in the sense that the estimation error on the training sample vanishes.
\item From the test distribution a finite but large sample of size $n$ is given, and we focus
upon unbiased estimators of the class prior probabilities.
\end{itemize}
For unbiased estimators the Cram{\'e}r-Rao lower bound specifies a minimum value for the variance 
that cannot be undercut.
Denote by $\widehat{q}_n^{\text{ML}}$ the maximum-likelihood (ML) estimator 
of the test prior probability $q_1$ of class $1$ and by $\sigma^2_{\text{ML}}$ 
its so-called asymptotic variance. Then 
$\frac{\sigma_{\text{ML}}^2}{n}$ is the Cram{\'e}r-Rao lower bound for the variances
of the unbiased estimators of $q_1$ on test samples of size $n$ when the training distribution is known 
(called here `asymptotic setting'), see Section~5 of Tasche~\cite{Tasche2021Minimising}.

We compare $\sigma^2_{\text{ML}}$  with the asymptotic variances in the sense of Definition 10.1.9 of 
Casella and Berger~\cite{Casella&Berger} of the Friedman estimator $\widehat{q}_n^{\text{Fried}}$ and 
the DeBias estimator $\widehat{q}_n^{\text{DeBias}}$ of the test prior probability $q_1$ of class $1$.

We assume that both the conditional distribution of $X$ given $Y=1$ and the conditional distribution of
$X$ given $Y=2$ have densities $g_1>0$ and $g_2>0$ with respect to some measure\footnote{%
In Example~\ref{ex:binormal} below $\mu$ is the Lebesgue measure on $\mathbb{R}$.
} $\mu$. In particular, then the posterior probability
$P[Y=1|X=x]$ can be represented as
\begin{equation}\label{eq:posterior}
P[Y=1|X=x] \ =\ \frac{p_1\,g_1(x)}{p_1\,g_1(x) + (1-p_1)\,g_2(x)},
\end{equation}
and the density of the feature vector $X$ under the test distribution $Q$ is given by
\begin{equation}
g_Q \ = \ q_1\,g_1 + (1-q_1)\,g_2.
\end{equation}

Since the training distribution $P$ is assumed to be known, in the following all expected values $E_P[Z]$ 
are deterministic values that need not be estimated. In particular, also the prior probabilities
$p_1$ and $p_2 = 1 - p_1$ are known constants. In contrast, the test distribution $Q$ is not known but an i.i.d.~sample
$X_1, \ldots, X_n$ of the feature vector $X$ drawn from its distribution under $Q$ is observed.

\emph{ML estimator.} For a detailed description of the ML estimator $\widehat{q}_n^{\text{ML}}(X_1, \ldots, X_n)$
$= \widehat{q}_n^{\text{ML}}$
we refer to Section~4 of Tasche~\cite{Tasche2021Minimising}, as there is no closed-form representation of
the ML estimator. However, its asymptotic variance $\sigma_{\text{ML}}$ under $Q$ is known:
\begin{equation}
\sigma_{\text{ML}}^2  =  E_Q\left[\Big(
    \frac{g_1(X) - g_2(X)}{g_Q(X)}\Big)^2 \right]^{-1}
     = \ \frac{q_1^2\,(1-q_1)^2}{\mathrm{var}_Q\bigl[E_Q[Y=1|X]\bigr]}.\label{eq:bound}
\end{equation}
$\sigma_{\text{ML}}^2$ is characterised through 
the property that $\sqrt{n}\,\bigl(\widehat{q}_n^{\text{ML}} - q_1\bigr)$ 
converges in distribution toward the normal distribution with mean $0$ and variance $\sigma_{\text{ML}}^2$.
Observe that $\sigma_{\text{ML}}^2$  is a function of $q_1$ but not of $p_1$.

\emph{Friedman estimator.} In the binary case, under the assumption on 
semi-asympto{\-}tics made for this section, the Friedman
estimator  $\widehat{q}_n^{\text{Fried}}(X_1, \ldots, X_n)$ $=\widehat{q}_n^{\text{Fried}}$  based on the
homonymous method presented in Section~\ref{se:method} can be written as
\begin{subequations}
\begin{equation}\label{eq:Fried.est}
\widehat{q}_n^{\text{Fried}} \ = \ \frac{\frac{1}{n} \sum_{i=1}^n f^\ast(X_i)  - E_P[f^\ast(X)|y=2]}
	{E_P[f^\ast(X)|y=1] - E_P[f^\ast(X)|y=2]},
\end{equation}
with $f^\ast$ defined through \eqref{eq:max}. 
Friedman~\cite{friedman2014class} observed that $f^\ast$ can also be represented as
\begin{equation}\label{eq:Nop}
f^\ast(x)\  =\ \begin{cases}
1, & \mathrm{if\ }g_1(x) > g_2(x),\\
0, & \mathrm{if\ }g_1(x) \le g_2(x).
\end{cases}
\end{equation}
As a consequence of \eqref{eq:Nop}, the right-hand side of \eqref{eq:Fried.est} does not depend
on $p_1$ or $p_2$ for $f^\ast(X)$ or any of the $f^\ast(X_i)$. Therefore, also $\widehat{q}_n^{\text{Fried}}$ as
defined in \eqref{eq:Fried.est} does not change if $p_1$ or $p_2$ are changed.
From the central limit theorem, it follows that $\sqrt{n}\,(\widehat{q}_n^{\text{Fried}} - q_1)$ under
$Q$ converges toward a normal distribution with mean $0$ and variance 
$\sigma_{\text{Fried}}^2$.
More precisely, the asymptotic variance of $\widehat{q}_n^{\text{Fried}}$ is
\begin{equation}
\sigma_{\text{Fried}}^2\ = \ \frac{E_Q[f^\ast(X)]\,\bigl(1 - E_Q[f^\ast(X)]\bigr)}
	{(E_P[f^\ast(X)|y=1] - E_P[f^\ast(X)|y=2])^2}.
\end{equation}
\end{subequations}

\emph{DeBias estimator.} In the binary case, under the assumption on semi-asymp{\-}totics 
made for this section, the DeBias
estimator $\widehat{q}_n^{\text{DeBias}} = \widehat{q}_n^{\text{DeBias}}(X_1, \ldots, X_n)$ based on the
method presented in Remark~\ref{rm:debias} can be written as
\begin{subequations}
\begin{equation}\label{eq:DeBias.est}
\widehat{q}_n^{\text{DeBias}} \ = \ 
	\frac{p_1\,(1-p_1)}{\mathrm{var}_P\bigl[P[Y=1|X]\bigr]}\,\left( \frac{1}{n} \sum_{i=1}^n P[Y=1|X=X_i]-p_1\right) +
    p_1.
\end{equation}
From the central limit theorem, it follows that $\sqrt{n}\,(\widehat{q}_n^{\text{DeBias}} - q_1)$ under
$Q$ converges toward the normal distribution with mean $0$ and variance 
$\sigma_{\text{DeBias}}^2$,
or more precisely, the asymptotic variance of $\widehat{q}_n^{\text{DeBias}}$ is
\begin{equation}\label{eq:sigma.DeBias}
\sigma_{\text{DeBias}}^2\ = \ \left(\frac{p_1\,(1-p_1)}{\mathrm{var}_P\bigl[P[Y=1|X]\bigr]}\right)^2\,
	\mathrm{var}_Q\bigl[P[Y=1|X]\bigr].
\end{equation}
\end{subequations}
Note that it follows from \eqref{eq:bound} and \eqref{eq:sigma.DeBias} that 
$\sigma_{\text{ML}}^2 = \sigma_{\text{DeBias}}^2$ in the case of no shift, i.e.~$p_1=q_1$.
As all quantities derived from $P$ are assumed to be constant in the setting of this section, it
follows as in Remark~\ref{rm:same} that the asymptotic variance $\sigma_{\text{PAC}}^2$ of the
PAC estimator discussed in Remark~\ref{rm:PAC} is identical with $\sigma_{\text{DeBias}}^2$, i.e.
$\sigma_{\text{DeBias}}^2  =  \sigma_{\text{PAC}}^2$.
For this reason, PAC is omitted from the following numerical example.

\begin{figure}[ht]
  \centering
  \includegraphics[width=\linewidth]{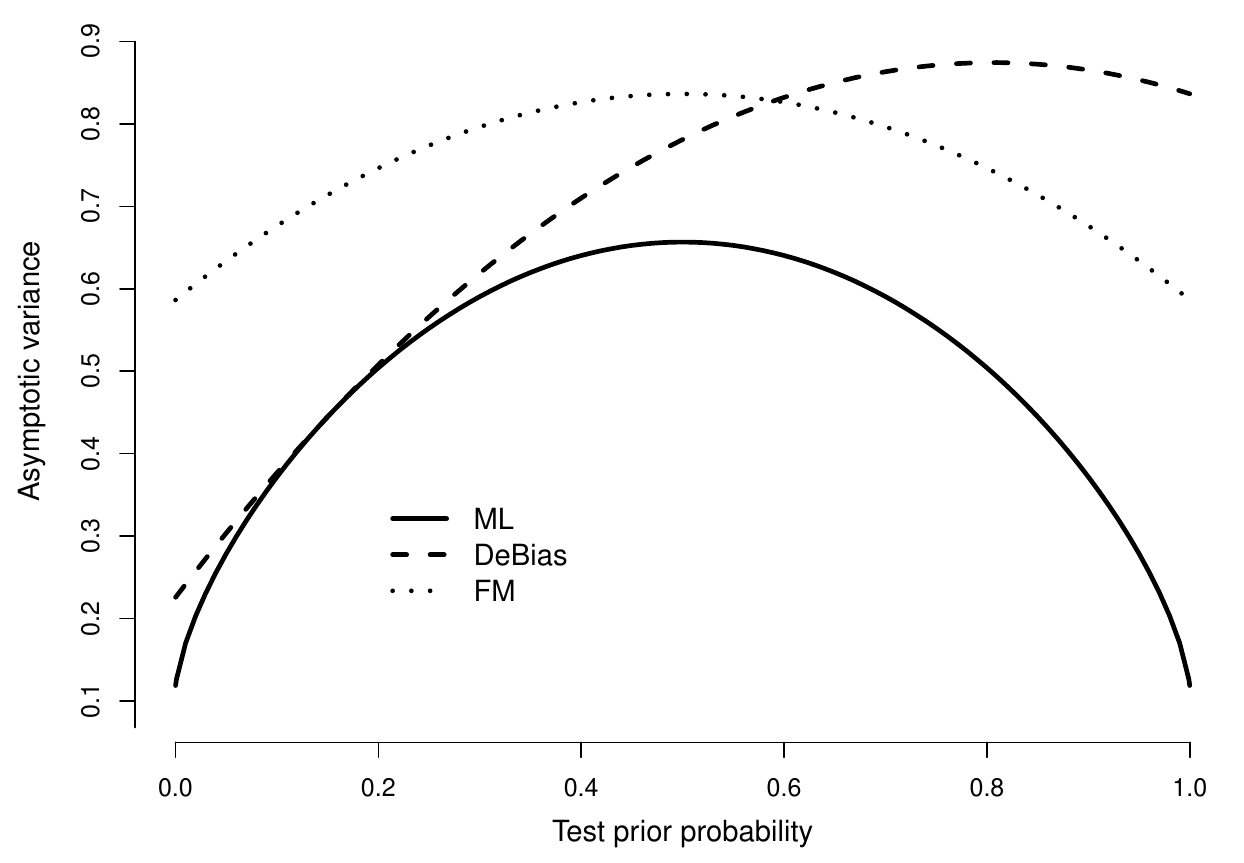}
  \caption{Asymptotic variances of maximum likelihood estimator, DeBias estimator and
  Friedman estimator in a binormal model. See Example~\ref{ex:binormal} for the specification of the underlying
  model.}\label{fig:1}
\end{figure}

\begin{example}\label{ex:binormal}
We consider the same univariate binormal model with equal variances of the class-conditional
distributions as in Section~7 of Tasche~\cite{Tasche2021Minimising}: 
The two normal class-conditional distributions of the feature variable $X$ are given by
\begin{subequations}
\begin{equation}\label{eq:binorm}
X\,|\,Y=i \ \sim \ \mathcal{N}(\mu_i, \sigma^2), \qquad i=1,2
\end{equation}
for conditional means $\mu_2 < \mu_1$ and some $\sigma > 0$. We choose 
\begin{equation}\label{eq:spec}
\mu_1 = 1.5, \quad \mu_2 = 0, \quad\text{and}\quad \sigma = 1.
\end{equation}
The model is then completely specified by choosing $p_1 = 0.15$ for the training prior probability of class~$1$.
The test prior probability $q_1$ of class~$1$ is not fixed as we calculate asymptotic variances of the
three above-mentioned prior distribution estimators for the whole range $(0,1)$ of $q_1$. The results are 
shown in Figure~\ref{fig:1}.\hfill \qed
\end{subequations}
\end{example}

The following observations can be made from Figure~\ref{fig:1}:
\begin{itemize}
\item The asymptotic variance of the ML estimator is uniformly lower than the asymptotic variances of the other
estimators for the whole possible range of the test prior probability of class~$1$ as is to be expected as a
consequence of the Cram{\'e}r-Rao inequality.
\item The asymptotic variance of the Friedman estimator is not uniformly lower 
than the asymptotic variance of the DeBias estimator 
and vice versa.
\item The DeBias estimator is almost optimal in the vicinity of the training 
prior probability ($p_1 = 0.15$) of class~$1$, as a consequence of \eqref{eq:bound} and \eqref{eq:sigma.DeBias}.
\item In contrast, the asymptotic variance of the DeBias estimator is much larger than the asymptotic variance of the
Friedman estimator in the $(0.8, 1)$ range of the test prior probability that is 
far away from the training prior probability $0.15$.
\end{itemize}

\section{Conclusions}
\label{se:conclusions}

We have considered Friedman's~\cite{friedman2014class} method in the 
context of a general framework for designing linear equation systems
for class distribution estimation and compared its binary version with DeBias which is another
method proposed by Friedman, and the maximum likelihood estimator. The main findings
of this paper are the following:
\begin{itemize}
\item The population versions of DeBias and Probability Adjusted Count 
(PAC, Bella et al.~\cite{bella2010quantification})
are identical and the binary special case of a new estimation approach based on inverting the covariance matrix
of the training posterior class probabilities (see Section~\ref{se:invertible}).
\item Although the definition of Friedman's method appears to involve evaluations 
of the posterior probabilities under the training distribution, the method is potentially
less sensitive to inaccuracies of the posterior estimates on smaller training datasets than
the maximum likelihood estimator. This is a consequence of the fact that Friedman's methode can be implemented  
without a need to estimate the training posterior class probabilities (see Section~\ref{se:method}).
\item As shown in Example~\ref{ex:binormal}, Friedman's method may be locally outperformed in terms of asymptotic 
variance by DeBias. But thanks to its independence of the training prior class probabilities its performance
is relatively uniform over the full range of possible values of the test prior probability of 
the positive class (class $1$ in Example~\ref{ex:binormal}), in contrast to DeBias' poor performance for test 
prior probabilities which are very different 
to the corresponding training prior probability.
\end{itemize}

\begin{credits}
\subsubsection{\ackname} 
The author would like to thank three anonymous reviewers
for their useful comments and suggestions.

\end{credits}
%
%
%
\bibliographystyle{splncs04}
\bibliography{C:/Users/Dirk/Documents/LehreForschung/Papers/Literature}

\addcontentsline{toc}{section}{References}

\end{document}